\documentclass[letterpaper, 10 pt, conference]{ieeeconf}  

\IEEEoverridecommandlockouts                             

\usepackage{cite}
\usepackage{placeins}

\usepackage{amsmath,amssymb,hyperref,amsthm}

\newtheorem{theorem}{Theorem}
\newtheorem{lemma}{Lemma}
\newtheorem{definition}{Definition}

\renewenvironment{proof}[1][Proof]{
  \par
  \noindent
  \textbf{#1.}
  \quad
  \ignorespaces
}{
  \unskip
  \quad
  \qed
  \par
  \vspace{6pt}
}

\usepackage{algorithm} 
\usepackage{algpseudocode} 

\usepackage{graphicx}
\usepackage{subfig}

\usepackage{xcolor}

\newcommand{\blue}[1]{\textcolor{black}{#1}}

\title{\LARGE \bf
A Generalized Voronoi Graph based Coverage Control Approach for Non-Convex Environment
}

\author{Albert Author$^{1}$ and Bernard D. Researcher$^{2}$
\thanks{*This work was not supported by any organization}
\thanks{$^{1}$Albert Author is with Faculty of Electrical Engineering, Mathematics and Computer Science,
        University of Twente, 7500 AE Enschede, The Netherlands
        {\tt\small albert.author@papercept.net}}%
\thanks{$^{2}$Bernard D. Researcheris with the Department of Electrical Engineering, Wright State University,
        Dayton, OH 45435, USA
        {\tt\small b.d.researcher@ieee.org}}%
}

\author{Zuyi Guo, Ronghao Zheng$^\dag$, Meiqin Liu, Senlin Zhang%
\thanks{This work was supported by the Zhejiang Provincial Natural Science Foundation of China under Grant LZ24F030001, and the National Natural Science Foundation of China under Grants U23B2060 and 62173294. 
(\textit{Corresponding author: Ronghao Zheng.})}
\thanks{Zuyi Guo and Ronghao Zheng, and Senlin Zhang are with the College of Electrical Engineering, Zhejiang University, Hangzhou 310027, China, and also with the State Key Laboratory of Industrial Control Technology, Zhejiang University, Hangzhou 310027, China. (e-mail: {\{3220105160, rzheng, slzhang\}@zju.edu.cn})
}
\thanks{Meiqin Liu is with the National Key Laboratory of Human-Machine Hybrid Augmented Intelligence, Xi’an Jiaotong University, Xi’an 710049, China, and also with the State Key Laboratory of Industrial Control Technology, Zhejiang University, Hangzhou 310027, China. (e-mail: {liumeiqin@zju.edu.cn})}
}

\begin{document}

\maketitle
\thispagestyle{empty}
\pagestyle{empty}

\begin{abstract}

To address the challenge of efficient coverage by multi-robot systems in non-convex regions with multiple obstacles, this paper proposes a coverage control method based on the Generalized Voronoi Graph (GVG), which has two phases: Load-Balancing Algorithm phase and Collaborative Coverage phase. In Load-Balancing Algorithm phase, the non-convex region is partitioned into multiple sub-regions based on GVG. Besides, a weighted load-balancing algorithm is developed, which considers the quality differences among sub-regions. By iteratively optimizing the robot allocation ratio, the number of robots in each sub-region is matched with the sub-region quality to achieve load balance. In Collaborative Coverage phase, each robot is controlled by a new controller to effectively coverage the region. The convergence of the method is proved and its performance is evaluated through simulations.

\end{abstract}

\section{INTRODUCTION}

Recent advancements in multi-robot systems have revolutionized distributed problem-solving paradigms by enabling autonomous agents to collaboratively achieve complex objectives through local interactions. Coverage control, as one of the key branches, has attracted a lot of attentions in the past decades, with widespread applications ranging from environmental monitoring \cite{r1}, positioning service \cite{r2_2}, to disaster rescue \cite{r2}. It requires robots to observe events in a given region, which is often described by a density function, and then self-organize into spatial configurations that optimize sensing quality metrics according to the importance of these events.

A move-to-centroid strategy using Lloyd’s algorithm is introduced in \cite{r3}, where the agents are driven towards the center of their respective Voronoi regions to optimize the coverage cost function. 
To deal with scenarios where the density function changes over time, a continuous-time coverage algorithm is proposed by Lee et al. in \cite{r4} to lend robots to a distributed implementation while guarantee the optimization.
In addition, by taking into account the sensing ranges of the robots, Ref. \cite{r5} comes up with a distributed algorithm for sensing-limited robots using a combination of ground and aerial robots. Furthermore, Voronoi partitions also have been used in persistent coverage algorithms\cite{r5_1}. However, these algorithms are restricted to convex environments because the Voronoi partitions do not address the shape of the environment nor the obstacles\cite{r6}, which are more common in reality.

\begin{figure}[!tb] 
    \centering 
    \includegraphics[width=0.9\linewidth]{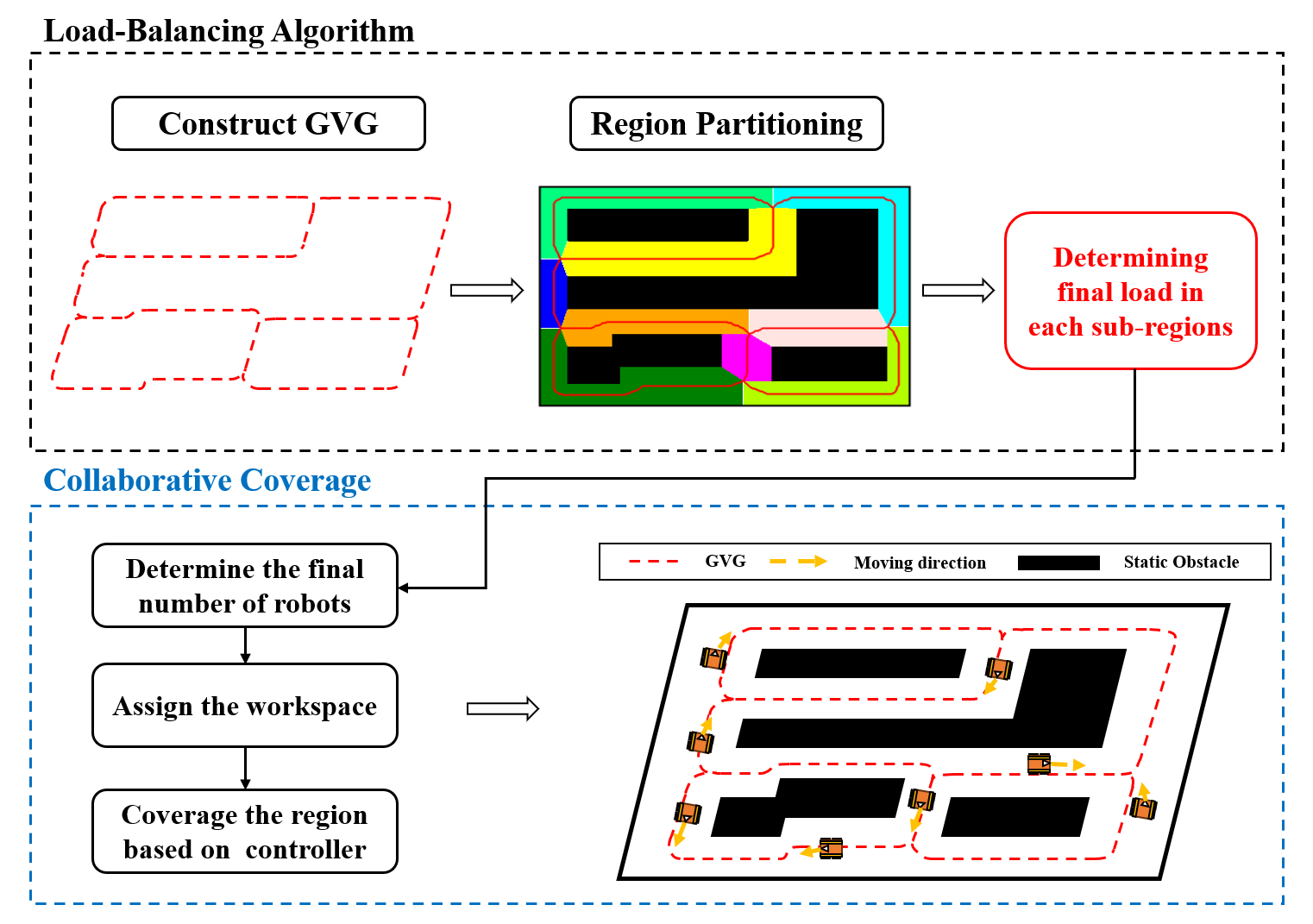} 
    \caption{Overview of the proposed method, which includes Load-Balancing Algorithm phase (top) and Collaborative Coverage phase (bottom). Load-Balancing Algorithm determines the final number of robots in each region. In Collaborative Coverage phase, each robot are controlled to efficiently cover its region based on the designed controller.} 
    \label{fig:illus} 
\end{figure}

To address this limitation, some algorithms are proposed to control robots to achieve coverage in non-convex environments. In \cite{r7} and \cite{r8}, the environment is transformed to a convex region using a diffeomorphism. But due to high computational cost and the possibility of leading to a different implementation from that in the original environment, the coverage performance is greatly limited. Ref.~\cite{r9} presents a strategy that combines Lloyd’s algorithm and TangentBug to provide Voronoi coverage in non-convex environment, but it may lead to a result that coverage effect may be suboptimal. 
\cite{r9_1} introduces a method using boosting functions to escape local minimum for multi-robot systems.
Since the workload of the divided regions is not always equal when Voronoi partitions are performed, an equitable workload partition is proposed by dividing the whole region, which is convex polygon or with parallel boundaries, into multiple stripes \cite{r10,r11}. To cope with non-convex scenarios, a sectorial coverage control algorithms for load balancing as well as optimization of coverage performance has been introduced \cite{r12}. However, it can only be applied to star-shaped regions and fail to deal with regions with multiple holes/obstacles. In fact, such cases (i.e., regions with multiple holes/obstacles) are far more common in practical scenarios, making this limitation a critical constraint on its applicability.

In order to extend the coverage to non-convex environments \blue{with multiple static obstacles} while considering load-balancing, this paper proposes a novel coverage control approach for dividing and covering regions using generalized Voronoi graph (GVG) \cite{r13,r14,r15}, as is shown in Fig.~\ref{fig:illus}. Due to its capability to characterize the topological structure of the entire environment, GVG has been broadly applied to robotic navigation tasks. In this paper, we use GVG to divide the whole region into sub-regions. Unlike load-balancing algorithms in \cite{r17,r18}, which only consider the loads of edges while ignoring the difference of edge weights, our algorithm takes the weights into account, whose design addresses the core challenge that cannot be resolved by the methods above. In summary, the main contributions of this work are listed as follows.
\begin{enumerate}
\item[1)] Propose a novel coverage strategy in non-convex environments with multiple static obstacles by using GVG as a basis for assigning and guiding robots. 
\item[2)] Design a novel distributed load-balancing algorithm, which is mainly intended to address the situation where edge weights need to be considered.
\item[3)] Provide detailed proof of convergence for the load-balancing algorithm and coverage control. Simulations are conducted to validate the effectiveness.
\end{enumerate}

The rest of this paper is organized as follows. Section~\ref{sec:2} formulates the coverage problem. Section~\ref{sec:3} introduces load-balancing algorithm for the assignment of the robots. Section~\ref{sec:4} provides distributed coverage algorithm. Section~\ref{sec:5} presents experimental validation of the algorithms. Section~\ref{sec:6} concludes this paper and demonstrates our future work.

\section{PROBLEM STATEMENT}
\label{sec:2}
\subsection{GVG}

Consider a non-convex area $D \subset \mathbb{R}^2$ with several holes which represent static obstacles $C_1, \ldots, C_{M}$. We denote $C_0$ as the external boundary of the area, which belongs to the obstacle set $\{C_i\}$. The distance function $d_i(q)$ is formulated as $d_i(q)=\min_{c \in C_i} \| q - c \|$, which measures the minimum distance from point $q$ to static obstacle $C_i$. The gradient of $d_i(q)$ is $\nabla d_i(q)=\frac{q-c_0}{\|q-c_0\|}$, where $\nabla d_i(q)$ is a unit vector in the direction from $q$ to the nearest point $c_0$ in $C_i$. Then the set of GVG edge defined by $C_i$ and $C_j$ is
\begin{equation*}
\begin{aligned}
E_{ij} = \{ q \in D: d_i(q) = d_j(q) \leq d_k(q) \forall k \\
\text{such that } 
\nabla d_i(q) \neq \nabla d_j(q) \}.
\end{aligned}
\end{equation*}
Let $|E|$ denote the cardinality of the set $\{E_{ij}\}$. The points where a GVG edge $E_{ij}$ terminate are defined as \emph{nodes}, which have the properties that $d_i(q) = d_j(q) = d_k(q) $ for at least one $k \neq i,j$. A circle, referred to as the \emph{node circle}, is formed by taking the node as the center and the minimum distance from the node to the nearest obstacle as the radius, where the points at intersection are called the \emph{closest points}.
From Fig.~\ref{fig:1}, it is obvious that this circle maintains a tangential relationship with several obstacles, yet no part of its inner area comes into contact with any obstacle. 

\begin{figure}[t] 
    \centering 
    \includegraphics[width=0.75\linewidth]{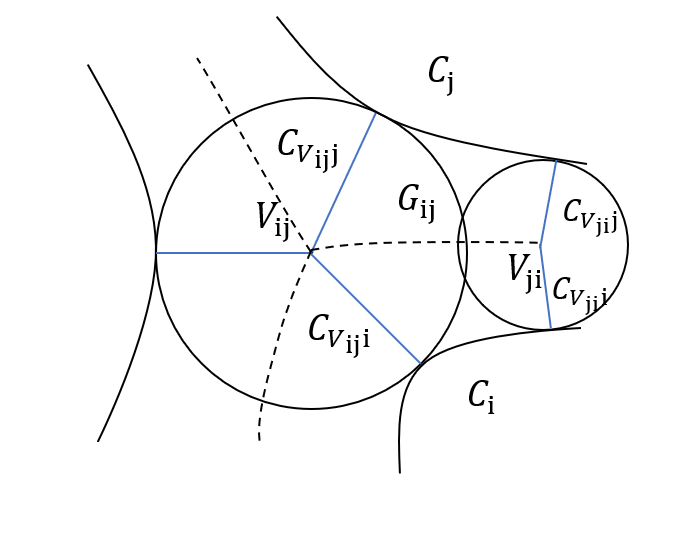} 
    \caption{$V_{ij}$ and $V_{ji}$ are two GVG nodes defined by $E_{ij}$. The circle is the node circle. The blue line represents the connection between the node and the intersection point of the circle and an obstacle, and it also serves as the boundary of the region.} 
    \label{fig:1} 
\end{figure}

Consider a GVG edge $E_{ij}$. For convenience in definition, we denote the nodes on its two sides as $V_{ij}$ and $V_{ji}$, respectively. Let \blue{$C_{V_{ij}i}$} be the line segment between $V_{ij}$ and the closest point in $C_i$. We thus define a region in $D$ that is bounded by $C_{V_{ij}i}$,$C_{V_{ij}j}$,$C_{V_{ji}i}$,$C_{V_{ji}j}$ and the boundary of the obstacles $\partial C_i$ and $\partial C_j$ as a GVG cell $G_{ij}$, which is shown in Fig.~\ref{fig:1}.

\subsection{PROBLEM FORMULATION}

Let $\mathcal{K} = \{1,\ldots,K\}$ denote the set of robots trying to coverage the area $D$ with positions $p_i(t) \in D, i \in \mathcal{K}$. The density value of any $q\in D$ is $\phi(q)$. Consider the circumstance that $K$ is large enough. Define the matrix representing the index of each cell as $\mathcal{C}=\{1,\ldots,|E|\}$ and each GVG cell $G^i,i\in \mathcal{C}$ (for the sake of argumentative convenience, we will omit the subscripts and use superscripts to denote the indices of cells in the subsequent discussion, i.e. $G^i$ and $E^i$ ) contains $K_i$ robots, where $K_i$ is at least one. In each GVG cell, every robots are assigned a workspace $O_{j},j\in \mathcal{K}$. So the purpose of coverage control is to minimize the coverage cost function
\begin{equation}
\begin{aligned}
\mathcal{H}=\sum_{i = 1}^{|E|}\sum_{j = 1}^{K_i}\int_{O_j}f(p_j,q)\phi(q)\, dq . \label{e1}
\end{aligned}
\end{equation}
The dynamics of the $j$th robots is presented as
\begin{equation}
\begin{aligned}
\dot{p}_j(t) = u_j, j \in \mathcal{K}, \label{e2}
\end{aligned}
\end{equation}
where $u_j$ is the control input. In general, the aim of this paper is to design a distributed control algorithm to minimize the cost function~\eqref{e1}.

\section{LOAD-BALANCING ALGORITHM}
\label{sec:3}

In this section, we provide a distributed load-balancing algorithm that takes weights into account. We denote the mass of the GVG cell $G^i$ as $e_i=\int_{G^{i}} \phi(q) \, dq,i\in \mathcal{C}$. Therefore the total mass of the area is $e= \sum\nolimits_{i=1}^{|E|} e_i$. To achieve a balanced workload to enhance the coverage performance, robots are controlled according to the mass of GVG cell to evenly distribute within the area. This strategy allows the robots to better cover the whole area while consider the influence of the mass of GVG cell, which we call a \emph{weighted quantization consensus problem}.

Assume that each robot within a given cell is aware of the number of robots in that cell and the cell's mass, as well as the \blue{same information} from its neighboring cells. The set of neighbors of cell $G^i$ is denoted as  $\mathcal{N}^i$ to represent the cells that are adjacent to $G^i$. Then we use $x_j(t)=\frac{K_j(t)}{e_j},\forall j \in \mathcal{C}$ to express the load of the cell, where $t \ge0$ represents the execution time.

At first, every robots in each cell $G^i$ run the algorithm \ref{alg:1} to determine the final ideal number of robots for cell $G^i$. They calculate the load in their own cell and then compare it with the minimum load among their neighbors(Lines 1-3). When the neighbor's load is smaller, they take the average of the two loads(Lines 4-7). It is noted that the calculated load $x_i(t)$ in the algorithm is not necessarily an integer. Theorem~\ref{alg:1} proves the asymptotic convergence of the $x$-values of all cells to an identical value as $t \rightarrow \infty$. For practical implementation, we select a sufficiently large finite time $t_1$ and adopt the $x$-values at $t_1$ as the approximate ideal load for each cell to proceed with the load-balancing process.     

For cell $G^i$, its ideal number of robots $K_i^*$ is obtained after Algorithm~\ref{alg:1} completes execution. For each cell $G^i$, we define $c_i = K_i - \lfloor K_i^* \rfloor$, which evaluates the deviation between the actual number of robots $K_i$ and $\lfloor K_i^*\rfloor$, where $\lfloor \cdot\rfloor$ is the floor operator. In this way, we transform a weighted quantization consensus problem into an unweighted one that solely considers integer values\cite{r17,r18}. From this, it can be seen that the values of $c_i$ can be either positive or negative. 
We define $\bar{c}= \frac{1}{|E|}\sum\nolimits_{i=1}^{|E|} c_i \in[0,1)$. \blue{In order to achieve load balance while meeting the integer constraint, we will introduce the following definition.}

\begin{definition}
\label{le:3.1}
A necessary condition for load balancing is that, for each cell $G^i$, its value of $c_i$ will eventually be either $c_i = \lfloor\bar c \rfloor$ or $c_i = \lceil\bar c\rceil$, where $\lceil \cdot \rceil$ is the ceiling operator. 
\end{definition}

Therefore, according to Definition \ref{le:3.1}, when load balancing is achieved, it must hold that
\begin{equation}
\left\{
\begin{gathered}
\alpha + \beta = |E|, \\ 
\alpha \lfloor\bar c\rfloor +\beta \lceil\bar c\rceil = |E| \bar c, 
\end{gathered}\label{e3}
 \right.
\end{equation}
where $\alpha$ and $\beta$ are the number of cells at $\lfloor\bar c\rfloor$ and $\lceil\bar c\rceil$, respectively. 
Based on this, we propose Algorithm \ref{alg:2}.

Algorithm \ref{alg:2} can be divided into three phases: Offering Phase(Lines 3-7), Accepting Phase(Lines 8-13), and Passing Phase(Lines 14-19). Initially, cell $G^i$ randomly selects a neighbor $j_o$ whose $c_{j_o}$ is the minimum. If $c_{j_o}<c_i$, it sends offer to $j_o$. Then in Accepting Phase, the cell $i$ will check the offers it has received, which are stored in $R_1 = \{(j_1,c_{j_1}(t),\ldots,(j_r,c_{j_r}(t)\}$. If it has received offers, i.e. $|R_1|>0$, then it chooses $j_a$ randomly from neighbors that has the largest value $c_{j_a}=\max_{s \in \{1,\ldots,r\}} c_{s}$ and sends acceptance. Lastly, in Passing Phase, the cell $G^i$ will check its acceptance $R_2=\{h\}$. If it has acceptance, i.e. $|R_2|>0$, it will pass a robot to the cell $h$.

\begin{algorithm}[t]
\caption{ Determining ideal number of robots for cell $i$} 
\label{alg:1}
\begin{algorithmic}[1] 
\Statex \textbf{Input:} $x_j(0),\forall j \in \mathcal{N}^i,x_i(0),e_i,t_1$. 
\Statex \textbf{Output:} $K_i^*.$
\State $t \leftarrow 0$;
\While {$t<t_1$}
  \State Select one neighbor $j$ with minimum load $x_j(t)$;
  \If{$x_j(t)<x_i(t)$}
    \State $\Delta x \leftarrow \frac{x_i(t)-x_{j}(t)}{2};$
    \State $x_i(t+1)\leftarrow x_i(t)-\Delta x;x_j(t+1)\leftarrow x_j(t)+\Delta x;$
  \EndIf
  \State Increment $t$;
\EndWhile
\State $ K_i^* \leftarrow e_ix_i(t_1)$;
\end{algorithmic}
\end{algorithm}

\begin{algorithm}[t]
\caption{ Calculate final number of robots for cell $i$} 
\label{alg:2}
\begin{algorithmic}[1] 
\Statex \textbf{Input:} $c_j(t),\forall j \in \mathcal{N}^i\cup\{i\},t_2,t_1$. 
\State $t \leftarrow t_1;$
\While {$t<t_2$}
  \State {\footnotesize\textbf{// Offering Phase:} }
  \State Select a neighbor $j_o$ with minimum $c_{j_o}(t)$;
  \If{$c_{j_o}(t)<c_{i}(t)$}
    \State Send offer to $j_o$ consisting of ($i,c_{i}(t)$);
  \EndIf
  \State {\footnotesize\textbf{// Accepting Phase:} }
  \State Check the received offer $R_1$;
  \If{$|R_1|>0$}
    \State Select a neighbor $j_a$ with maximum $c_{j_a}(t)$;
    \State Send acceptance to $j_a$ consisting of (i);
  \EndIf
  \State {\footnotesize\textbf{// Passing Phase:} }
  \State Check the acceptance $R_2=\{(h)\}$;
  \If{$|R_2|>0$}
    \State $\delta \leftarrow 1$;
    \State $c_{i}(t)\leftarrow c_{i}(t)-\delta;c_{h}(t)\leftarrow c_{h}(t)+\delta $;
  \EndIf
  \State Increment $t$;
\EndWhile
\State $K_i \leftarrow \lfloor K_i^*\rfloor +c_{i}(t_2)$;
\end{algorithmic}
\end{algorithm}

Then we will concentrate on the analysis of the Load-Balancing Algorithm.

\begin{theorem}
\label{theo:1}
In the execution of Algorithms \ref{alg:1} and \ref{alg:2}, Eq.~\eqref{e3} is satisfied as $t \rightarrow \infty$.
\end{theorem}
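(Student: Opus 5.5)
The plan is to treat the two algorithms separately. First, Algorithm~\ref{alg:1} drives the real-valued loads $x_i$ to consensus. Second, Algorithm~\ref{alg:2} drives the integer deviations $c_i$ to within one of each other. Once every $c_i\in\{\lfloor\bar c\rfloor,\lceil\bar c\rceil\}$, the system \eqref{e3} follows immediately. The reason is that $\sum_i c_i=|E|\bar c$ is preserved, and the cells split into $\alpha$ at the floor and $\beta$ at the ceiling with $\alpha+\beta=|E|$. We assume throughout that the cell adjacency graph is connected, and that simultaneous transfers are resolved so that each exchange is pairwise, as in \cite{r17,r18}.

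\textbf{Algorithm~\ref{alg:1}.} Each update replaces a pair $(x_i,x_j)$ by its average, so the sum $\sum_i x_i$ is conserved. A natural Lyapunov function is $V_1(t)=\sum_i (x_i(t)-\bar x)^2$, where $\bar x$ is the mean load. Averaging a pair with gap $\Delta=x_i-x_j>0$ decreases $V_1$ by exactly $\Delta^2/2$. So $V_1$ is nonincreasing and the total decrease is summable. Now suppose the values have not reached consensus. By connectivity, the cell with maximal $x_i$ (or some cell on the boundary of the max-level set) has a neighbor with strictly smaller load, and it therefore fires. Hence the maximum pairwise neighbor gap tends to zero, and by connectivity all $x_i$ converge to a common value $x^*$. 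Then $K_i^*\to e_i x^*$.

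Note that Algorithm~\ref{alg:1} averages $x_i=K_i/e_i$ unweighted. Strictly speaking, this conserves $\sum_i x_i$ rather than $\sum_i e_i x_i=K$. It therefore matters only that Algorithm~\ref{alg:2} conserves $\sum_i c_i$, and that $\bar c$ is defined from the $c_i$ at time $t_1$. I would state the argument in that form, so the exact value of $x^*$ is irrelevant to \eqref{e3}.

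\textbf{Algorithm~\ref{alg:2}.} Each pass moves one unit from $i$ to a neighbor $h$ with $c_h\le c_i-1$. This preserves $\sum_i c_i$, so $\bar c$ is invariant. Use $V_2=\sum_i c_i^2$, a nonnegative integer. A transfer with $c_i-c_h=d\ge1$ changes $V_2$ by $-2d+2\le 0$. The change is strictly negative when $d\ge2$, and zero when $d=1$, which merely swaps the two values.

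\textbf{Main obstacle.} The hardest step is showing that strictly decreasing transfers keep occurring until the spread $\max_i c_i-\min_i c_i$ is at most $1$. I would follow the argument of \cite{r17,r18}. Suppose the spread is at least $2$. Consider a shortest path in the cell graph from a cell attaining $\max c$ to one attaining $\min c$. Because of the random tie-breaking in the offer and accept steps, with positive probability the unit-difference swaps along this path transport a "token" until a neighboring pair with difference at least $2$ performs a transfer. This happens within a bounded number of steps, with probability bounded below by some $p>0$ that depends only on the graph.

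Since $V_2$ is integer-valued and bounded below, it can strictly decrease only finitely many times. A Borel--Cantelli (geometric trials) argument therefore gives, almost surely, a finite time after which the spread is at most $1$. At that point all $c_i\in\{\lfloor\bar c\rfloor,\lceil\bar c\rceil\}$, with $\lceil\bar c\rceil=\lfloor\bar c\rfloor+1$ when $\bar c\notin\mathbb Z$. Counting the cells at each level gives $\alpha+\beta=|E|$ and $\alpha\lfloor\bar c\rfloor+\beta\lceil\bar c\rceil=\sum_i c_i=|E|\bar c$, which is \eqref{e3}.
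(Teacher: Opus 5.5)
Your proposal follows the same route as the paper, whose proof simply cites \cite[Claim 5.8]{r17} for the convergence of Algorithm~\ref{alg:1} and \cite[Theorem~1]{r18} for the fact that every $c_i$ ends up in $\{\lfloor\bar c\rfloor,\lfloor\bar c\rfloor+1\}$ with probability tending to one, after which \eqref{e3} follows by counting. What you add is detail the paper leaves implicit: the Lyapunov functions $V_1$ and $V_2$, and the observation that \eqref{e3} needs only conservation of $\sum_i c_i$ in Algorithm~\ref{alg:2}, not the value of $x^*$ (Algorithm~\ref{alg:1}'s unweighted averaging does not conserve $\sum_i e_i x_i$). The genuinely hard step, escaping unit-swap cycles via random tie-breaking, is deferred to the same references in both.
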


\begin{proof}
The convergence proof of Algorithm \ref{alg:1} is similar to \cite[Claim 5.8]{r17}. From \cite[Theorem~1]{r18}, after the execution of Algorithm \ref{alg:2}, for any $c_i(t),i \in \mathcal{C}$, it holds that $\lim_{t \to \infty} \Pr[c_i(t) \in \mathcal{T}] = 1$, where $\mathcal{T}=\{c \mid c_i \in \{\lfloor \bar c\rfloor,\lfloor \bar c\rfloor + 1\}, \, i = 1, \ldots, |E|\}$, which completes the proof.
\end{proof}

Theorem \ref{theo:1} ensures the final configuration of the number of robots in each cell, and is a necessary condition for the balance of loads. However, we note that this theorem only constrains the values of $\alpha$ and $\beta$, and does not specify the allocation of cells corresponding to the two cases of $\alpha$ and $\beta$. Noting that the load of each cell is a crucial factor in determining load balance, we thus need to further consider the impact of different cell allocation situations on load balance.

Define the fraction part of the number $\left\{ x \right\} = x - \lfloor x \rfloor$. Let $\mathfrak{M} = (\{K^*_{i_1}\},\{K^*_{i_2}\},\ldots,\{K^*_{i_|E|}\})$ be an ordered set, where $\{K^*_{i_1}\} \leq \{K^*_{i_2}\} \leq \ldots \leq \{K^*_{i_|E|}\}$. Let $S_1$ be the sum of the elements in $\mathfrak{M}$. If the load-balancing function is defined as $\mathcal{S}=\sum\nolimits_{i=1}^{|E|} |c_i-K^*_i|$, then we can introduce Definition \ref{le:1} and Theorem \ref{theo:2}.

\begin{definition}
\label{le:1}
For the ideal configuration, it must hold that for the first $\alpha$ elements of the ordered set $\mathfrak{M}$, the corresponding cells have $c=\lfloor \bar c\rfloor$, while the remaining elements correspond to cells with $c=\lceil \bar c\rceil$.
\end{definition}

\begin{theorem}
\label{theo:2}
For the final configuration, there is $\mathcal{S}_p\leq\mathcal{S}_f<\mathcal{S}_p+S_2$, where $S_2=\min(|E|-S_1,S_1)$. And $\mathcal{S}_p$ and $\mathcal{S}_f$ are the value of load-balancing function for the ideal configuration and the final configuration, respectively.
\end{theorem}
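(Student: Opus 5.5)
The plan is to reduce $\mathcal{S}$ to a function of only the set of cells that finish at $\lceil \bar c\rceil$, and then compare that set with the ideal one from Definition~\ref{le:1}. I will read the $i$th term of $\mathcal{S}$ as the deviation of the final robot count from the ideal count, $|K_i-K_i^*|$, where $K_i=\lfloor K_i^*\rfloor+c_i$. Write $f_i=\{K_i^*\}$, so that $|K_i-K_i^*|=|c_i-f_i|$.

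First I pin down the integers. Since $\bar c\in[0,1)$, Theorem~\ref{theo:1} gives $c_i\in\{0,1\}$ in the limit, with $\lfloor\bar c\rfloor=0$ and $\lceil\bar c\rceil=1$ when $\bar c>0$; the case $\bar c=0$ is trivial. Algorithms~\ref{alg:1} and~\ref{alg:2} conserve the total, so $\sum_i K_i^*=\sum_i K_i=K$. It follows that $\beta=\sum_i c_i=\sum_i f_i=S_1$, and hence $\alpha=|E|-S_1$. In particular $S_1$ is an integer and $S_2=\min(\alpha,\beta)$.

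Next, let $B$ be the set of $\beta$ cells with $c_i=1$. Then
\begin{equation*}
\mathcal{S}(B)=\sum_{i\notin B} f_i+\sum_{i\in B}(1-f_i)=S_1+\beta-2\sum_{i\in B} f_i .
\end{equation*}
So $\mathcal{S}$ is minimized exactly when $\sum_{i\in B} f_i$ is maximized over $|B|=\beta$. That happens when $B$ is the $\beta$ largest elements of $\mathfrak{M}$, which is the ideal configuration of Definition~\ref{le:1}. This gives $\mathcal{S}_p\le\mathcal{S}_f$ immediately, by a standard exchange argument.

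For the upper bound, let $B^*$ be the ideal set. Then
\begin{equation*}
\mathcal{S}_f-\mathcal{S}_p=2\Big(\sum_{B^*\setminus B} f_i-\sum_{B\setminus B^*} f_i\Big).
\end{equation*}
The exchanged sets have equal size $m\le\min(\alpha,\beta)=S_2$. Pairing them off, each pair contributes a difference in $[0,1)$. This yields $\mathcal{S}_f-\mathcal{S}_p<2m\le 2S_2$.

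The main obstacle is the factor $2$: the stated bound asks for $\mathcal{S}_f<\mathcal{S}_p+S_2$ rather than $\mathcal{S}_p+2S_2$. To sharpen it, I would try to use the constraints $\sum_i f_i=\beta$ and $f_i\in[0,1)$ to bound the exchanged mass globally rather than pair by pair.

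I am not confident this sharpening succeeds. With $|E|=2$ and $f=(\varepsilon,1-\varepsilon)$ we get $\beta=\alpha=1$ and $S_2=1$. The wrong assignment gives $\mathcal{S}_f-\mathcal{S}_p=2-4\varepsilon$, which is not below $S_2=1$ for small $\varepsilon$. So under my reading of $\mathcal{S}$, only $\mathcal{S}_p\le\mathcal{S}_f<\mathcal{S}_p+2S_2$ appears provable.

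If the authors' intent differs, I would first check whether a different normalization of $\mathcal{S}$ (for instance counting each misassigned pair once) or a different convention for $S_2$ recovers the stated constant. Failing that, I would prove the bound with $2S_2$ and note that the stated version needs that correction.
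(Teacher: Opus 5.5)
Your reading of $\mathcal{S}$ matches the paper's. Its proof evaluates $\mathcal{S}_p=\{K^*_{i_1}\}+\dotsb+\{K^*_{i_\alpha}\}+\beta-\{K^*_{i_{\alpha+1}}\}-\dotsb-\{K^*_{i_{|E|}}\}$. So a rounded-down cell contributes $\{K_i^*\}$ and a rounded-up cell contributes $1-\{K_i^*\}$, which is exactly your $|c_i-f_i|$; the printed definition $|c_i-K_i^*|$ is a slip. Under that reading your analysis is correct and your $|E|=2$ counterexample is valid. Stop looking for a sharpening: the bound with $S_2$ is false.

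The paper compares the ideal assignment with the reversed one (the $\beta$ smallest fractional parts rounded up), which is your extreme case. It reaches $S_2$ only through an algebra error: it writes $\mathcal{S}_f'-\mathcal{S}_p=\{K^*_{i_{|E|}}\}+\dotsb+\{K^*_{i_{|E|-S_2}}\}-\{K^*_{i_1}\}-\dotsb-\{K^*_{i_{1+S_2}}\}$, whereas in fact
\[
\mathcal{S}_f'-\mathcal{S}_p=2\Bigl(\sum_{k=|E|-S_2+1}^{|E|}\{K^*_{i_k}\}-\sum_{k=1}^{S_2}\{K^*_{i_k}\}\Bigr).
\]
The factor $2$ arises because moving a cell $a$ from rounded-up to rounded-down, and a cell $b$ the other way, changes $\mathcal{S}$ by $2(\{K_a^*\}-\{K_b^*\})$. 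The paper's sums also have $S_2+1$ terms rather than $S_2$.

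The reversed assignment is a legitimate final configuration. Algorithm~2 never sees fractional parts, and, as the simulation section itself reports, it keeps swapping robots between cells whose $c$ differ by one. Taking $m$ cells with $f=\varepsilon$ and $m$ with $f=1-\varepsilon$ gives $\mathcal{S}_f'-\mathcal{S}_p=2S_2(1-2\varepsilon)$, so $2S_2$ is the sharp constant.

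Your identity $\mathcal{S}(B)=S_1+\beta-2\sum_{i\in B}f_i$ also improves on the paper's route. The paper evaluates only the two extreme assignments and never shows they are the minimum and maximum of $\mathcal{S}$. Your identity gives $\mathcal{S}_p\le\mathcal{S}_f\le\mathcal{S}_f'$ for every admissible $B$ at once.

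Two repairs are needed.

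First, the case $\bar c=0$ is not trivial in your favour. There $S_2=0$ and $\mathcal{S}_f=\mathcal{S}_p$, so both the stated strict inequality and your $\mathcal{S}_f<\mathcal{S}_p+2S_2$ fail. Your step ``$<2m$'' likewise needs $m\ge1$. State the result as $0\le\mathcal{S}_f-\mathcal{S}_p\le2S_2\bigl(\{K^*_{i_{|E|}}\}-\{K^*_{i_1}\}\bigr)$, which is $<2S_2$ whenever $S_2\ge1$.

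Second, $\beta=S_1$ (equivalently $\sum_iK_i^*=K$) does not follow from conservation as you claim. Each pairwise update in Algorithm~1 preserves $x_i+x_j$, not $e_ix_i+e_jx_j$, so $\sum_ie_ix_i(t)$ drifts. The paper simply asserts $\bar c=S_1/|E|$; adopt it as an explicit standing assumption rather than deriving it from the algorithms.
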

\begin{proof}
As system reaches the final configuration, $\bar c=\frac{S_1}{|E|}\in[0,1)$, we have $\alpha=|E|(1-\bar c)=|E|-S_1,\beta =|E|\bar c=S_1.$ When the final configuration aligns with the ideal configuration, according to Definition \ref{le:1}, $\mathcal{S}_f=\mathcal{S}_p=\{K^*_{i_1}\}+\{K^*_{i_2}\}+\dotsb+\{K^*_{i_{\alpha}}\}+\beta-\{K^*_{i_{\alpha+1}}\}-\{K^*_{i_{\alpha+2}}\}-\dotsb-\{K^*_{i_{|E|}}\}$. Following this, we consider the most deviant configuration, which holds the fact that for the first $\beta$ elements in $\mathfrak{M}$, the corresponding cells have $c=\lceil \bar c\rceil$, while the remaining elements correspond to cells with $c=\lfloor \bar c\rfloor$. Thus, $\mathcal{S}_f'=\beta-\{K^*_{i_1}\}-\{K^*_{i_2}\}-\dotsb-\{K^*_{i_\beta}\}+\{K^*_{i_{\beta+1}}\}+\{K^*_{i_{\beta+2}}\}+\dotsb+\{K^*_{i_{|E|}}\}=\mathcal{S}_p+\{K^*_{i_{|E|}}\}+\dotsb+\{K^*_{i_{|E|-S_2}}\}-\{K^*_{i_{1}}\}-\dotsb-\{K^*_{i_{1+S_2}}\}\leq \mathcal{S}_p+S_2(\{K^*_{i_{|E|}}\}-\{K^*_{i_{1}}\})<\mathcal{S}_p+S_2$, which completes the proof.
\end{proof} 

Theorem~\ref{theo:2} quantifies the bounded deviation between the actual configuration of robots in GVG cells and the theoretically ideal configuration. This bound clarifies the maximum possible deviation of the actual configuration from the ideal one, establishing a rigorous performance guarantee for the proposed load-balancing algorithm.

\section{COLLABORATIVE COVERAGE}

\label{sec:4}

After the execution of Algorithms~\ref{alg:1} and \ref{alg:2},
we can strictly limit the final robot configuration to the ceiling or floor values of the ideal configuration for robots on each cell. Furthermore, to further enhance the coverage performance, we consider the robot coverage problem in each cell $G^{i} \subset D,i\in \mathcal{C}$. The set of robots in this cell is denoted by $\mathcal{K}_{i}=\{1,\ldots,K_{i} \}$.


In cell $G^i$, we denote the set of points with the property by \( Q^i(p) \), that is, \( Q^i(p) = \left\{ q \notin E^i \mid d(q, p) < d(q, p'), \forall p' \in E,p' \neq p\right\} \), where $p$ is a point on the GVG edge $E^i$. Therefore, we give the following lemma.

\begin{lemma}
\label{le:2}
 For any point $p$ on the GVG edge $E^i$, \( Q^i(p) \cup\{p\} \) forms a continuous one-dimensional path embedded in $\mathbb{R}^2$. Furthermore, for all points on $E^i$, \( \bigcup_{p \in E^i} Q^i(p) = G^{i} \setminus E^i \). For any \( p_j \neq p_k \) in $E^i$, \( Q^i(p_j) \cap Q^i(p_k) = \varnothing \).
\end{lemma}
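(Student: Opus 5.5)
The plan is to read $Q^i(p)$ as the set of points of $G^i\setminus E^i$ whose unique nearest point on the GVG edge is $p$, with the comparison restricted to $E^i$. That is, $Q^i(p)=\{q\in G^i\setminus E^i : d(q,p)<d(q,p')\ \forall p'\in E^i,\ p'\neq p\}$. The whole argument then rests on the \emph{nearest-point projection} $\pi:G^i\to E^i$. The three claims become: each fibre $\pi^{-1}(p)$ is a curve through $p$; the fibres cover $G^i\setminus E^i$; and distinct fibres are disjoint.

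\textbf{Disjointness.} I would do this step first, since it is immediate. If $q\in Q^i(p_j)\cap Q^i(p_k)$ with $p_j\neq p_k$, the definition gives both $d(q,p_j)<d(q,p_k)$ and $d(q,p_k)<d(q,p_j)$, which is a contradiction.

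\textbf{Covering.} Fix $q\in G^i\setminus E^i$. The edge $E^i$ is compact, since it is a closed arc between the nodes $V_{ij}$ and $V_{ji}$, so $d(q,\cdot)$ attains its minimum on $E^i$. It remains to show the minimizer is unique. Here I would use the geometry of the cell. $G^i$ is bounded by $E^i$, the four segments $C_{V_{ij}i},C_{V_{ij}j},C_{V_{ji}i},C_{V_{ji}j}$, and arcs of $\partial C_i,\partial C_j$. Hence $G^i\setminus E^i$ splits into two sides, one facing $C_i$ and one facing $C_j$. Take $q$ on the $C_i$ side. Through each $p\in E^i$ runs the segment from $p$ to its closest point on $C_i$, along the direction $-\nabla d_i(p)$. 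These segments do not cross: if two of them met at an interior point, that point would have two distinct nearest points on $E^i$ realized along straight segments, which contradicts the equidistance property $d_i=d_j$ defining $E^i$. At the nodes they fill out exactly up to the bounding segments $C_{V_{ij}i}$ and $C_{V_{ji}i}$. So these segments foliate the $C_i$ side, and $q$ lies on exactly one of them, say the one from $p$. Finally I must check that $p$ is the nearest point of $E^i$ to $q$. I would argue that the disc of radius $d(q,p)$ about $q$ is contained in the maximal disc about $p$ of radius $d_i(p)$, which touches $E^i$ only at $p$ in the relevant direction. That gives $q\in Q^i(p)$ and the covering claim.

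\textbf{Path structure.} By the previous step, $Q^i(p)\cup\{p\}$ is the union of the open segment from $p$ toward $C_i$ (up to $\partial C_i$) and the open segment from $p$ toward $C_j$. Two segments joined at $p$ form a continuous polygonal arc, so the union is a continuous one-dimensional path in $\mathbb{R}^2$. This part is routine once the foliation is in hand.

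\textbf{Main obstacle.} The hardest part is the claim that nearest-point projection onto $E^i$ coincides with following $\nabla d_i$, together with uniqueness of the minimizer. Both fail for general curves, because a point can be equidistant from two points of a curved $E^i$. I would first try to use the maximal-disc (medial-axis) property: the disc of radius $d_i(p)$ about $p$ is obstacle-free and tangent at the closest points. I would combine it with the fact that within $G^i$ the region lies between $E^i$ and the obstacles, ruling out ties by a convexity argument along each segment. If this is too hard in full generality, I would add a mild regularity assumption, such as $E^i$ smooth with curvature bounded by $1/d_i$ on the cell, which makes the projection unique by the standard tubular-neighbourhood argument.
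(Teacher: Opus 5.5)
Your disjointness step is the same as the paper's and is fine. You are also right that the covering step needs a proof that the closest point is unique, which the paper only asserts. The way you supply it, however, rests on a false identification. You take $Q^i(p)\cup\{p\}$ to be the two segments from $p$ to its closest points on $C_i$ and $C_j$, i.e.\ the directions $-\nabla d_i(p)$ and $-\nabla d_j(p)$. Those are nearest points on the \emph{obstacles}, whereas $Q^i(p)$ is defined by nearest points on the \emph{edge}.

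If $p=\gamma^i(s_0)$ is an interior minimizer of $s\mapsto\|q-\gamma^i(s)\|^2$, then $(q-p)\cdot\boldsymbol{\tau}(s_0)=0$, so $q$ lies on the normal line to $E^i$ at $p$. This is what the paper's local $(s,r)$ computation extracts: requiring $d(q,p')>d(q,p)$ for $p'=(\Delta s,0)$, with $\Delta s\to 0$ of either sign, forces $s=0$. It is also what the later parametrization $q=\gamma^i(s)+r\boldsymbol{v}(s)$ relies on.

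The spokes are not normal to $E^i$ in general. Since $E^i$ is a level set of $d_i-d_j$, we have $\boldsymbol{\tau}\cdot\nabla d_i=\boldsymbol{\tau}\cdot\nabla d_j$. So the two spokes are mirror images across the tangent line, and they are normal only when $\nabla d_i(p)=-\nabla d_j(p)$. For example, take a point obstacle at $(0,1)$ and the wall $y=-1$, so the edge is $y=x^2/4$. At $p=(2,1)$ the spoke to the point obstacle is horizontal. The point $q=(1,1)$ on it has $d(q,p)=1$, but $d\bigl(q,(1.8,0.81)\bigr)\approx 0.82$ with $(1.8,0.81)\in E^i$. So $q\notin Q^i(p)$, and the V-shaped path you describe is not $Q^i(p)\cup\{p\}$.

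The maximal-disc step that was meant to close this cannot work either. The disc $B\bigl(p,d_i(p)\bigr)$ is free of obstacles, but $E^i$ runs through its center. So the containment $B\bigl(q,d(q,p)\bigr)\subset B\bigl(p,d_i(p)\bigr)$ says nothing about $E^i$. In the example the containment holds and the edge still enters $B(q,1)$. In general, whenever $q-p$ is not normal to $E^i$, the edge crosses $\partial B\bigl(q,d(q,p)\bigr)$ transversally at $p$. Your non-crossing argument for the spokes has the same confusion: it invokes nearest points on $E^i$ where only nearest points on $C_i$ matter.

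The tool you list as a fallback is the right one, but it leads to a different picture. A reach condition on $E^i$ (curvature below $1/\epsilon(s)$, plus no near self-approach of distant parts of $E^i$) gives a unique projection. Its fibres are the normal segments $\{\gamma^i(s)+r\boldsymbol{v}(s): |r|\le\epsilon(s)\}$, and both the path claim and the covering must then be built on normals rather than spokes. Note also that the lateral sides $C_{V_{ij}i},C_{V_{ij}j},\dots$ of $G^i$ are spokes, not normals. So near the nodes, points of $G^i$ can project onto an endpoint of $E^i$, where the fibre need not be a curve. The paper does not treat this, but a rigorous version of your argument would have to.
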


\begin{proof}
We consider a sufficiently small neighborhood $U$ of $p$ in which any point $q$ is extremely close to $p$. Therefore, we establish a local coordinate system \((s, r)\) at point \(p\), where the \(s\) - axis is along the direction of the GVG edge $E^i$ and the \(r\) - axis is perpendicular to $E^i$. For any point \(q=(s, r)\) and a neighboring point \(p'=(\Delta s, 0) \in E^i\), we have:
\[
d(q, p) = \sqrt{s^2 + r^2}, \quad d(q, p') = \sqrt{(s - \Delta s)^2 + r^2}.
\]
According to the definition of \(Q^i\), we require \(d(q, p') > d(q, p)\), that is:$\sqrt{(s - \Delta s)^2 + r^2} > \sqrt{s^2 + r^2}$. Then we have $s < \frac{\Delta s}{2}$. As \(\Delta s \to 0\), the solution set converges to \(s = 0\), i.e., a line perpendicular to $E^i$. Then we prove the first property.

For any \( q \in G^{i} \setminus E^i \), there exists a unique closest point \( p \in E^i \). Therefore, \( q \) must belong to some \( Q^i(p) \), implying \( \bigcup_{p \in E^i} Q^i(p) = G^{i} \setminus E^i \).

Assume for contradiction that there exists a point \( q \in Q^i(p_j) \cap Q^i(p_k) \). Then \( q \) must satisfy \( d(q, p') > d(q, p_j) \) for all \( p' \in E^i \setminus \{p_j\} \) and \( d(q, p') > d(q, p_k) \) for all \( p' \in E^i \setminus \{p_k\} \). Substituting \( p' = p_k \) into the first condition gives \( d(q, p_k) > d(q, p_j) \), and substituting \( p' = p_j \) into the second gives \( d(q, p_j) > d(q, p_k) \), which is a contradiction. Thus, \( Q^i(p_j) \cap Q^i(p_k) = \varnothing \) for any \( p_j \neq p_k \).
\end{proof}

Therefore, according to Lemma \ref{le:2}, all the points $q$ in the GVG cell $G^i \setminus E^i$ can be projected onto $E^i$, and the projection point is denoted as $q^i(t),i\in \mathcal{C}$, which is closest to the $q$. According to the definition of GVG cell, the boundaries of multiple GVG cells are established by inscribed circles centered on nodes, as shown in Fig.~\ref{fig:1}.
The GVG edge $E^i$ is represented as a piecewise-smooth parametric curve $\gamma^i : [0, L^i] \to \mathbb{R}^2,i\in \mathcal{C}$, which is twice continuously differentiable and $L^i$ is the total length of the edge $E^i$. The projection point of the $j$th robot in $G^i$ is $p_j^i(t),j\in\mathcal{K}_i$, whose corresponding parameter on the curve is $s^i_j\in [0,L^i]$, i.e. $\gamma^i(s^i_j(t))=p_j^i(t)$. For each point $\gamma^i(s),s\in[0,L^i]$ on the GVG edge $E^i$, we define the Frenet–Serret coordinate system $(s,r)$ as follows
\begin{equation*}
\left\{
\begin{aligned}
&\boldsymbol{\tau}(s)=\frac{d\gamma^i}{ds}, \\
&\boldsymbol{v}(s)= R·\boldsymbol{\tau}(s),
\end{aligned}
\right.
\end{equation*}
where $\boldsymbol{\tau}(s)$ is the unit tangent vector at $\gamma^i(s)$ and 
$R=\begin{bmatrix} 0 & 1 \\ -1 & 0 \end{bmatrix}$
is the rotation matrix. Thus each point $q\in G^i$ can be represented by
\begin{equation}
q(s,r)=\gamma^i(s)+r\boldsymbol{v}(s),s\in[0,L^i],r\in[-\epsilon(s),\epsilon(s)],
\label{e:q}
\end{equation}
where $\epsilon(s)$ is the distance from point $\gamma^i(s)$ to the closest obstacle along the direction of $\boldsymbol{v}(s)$.

Assume that each robot is equipped with a virtual partition boundary, which divides the whole cell into multiple sub-regions. Then the boundary between the $j$-th and $j+1$-th robots are $\varphi_j^i=Q^i(b_j^i)\cup \{b_j^i\}$, where $b_j^i=\gamma^i(\frac{s_j^i(t)+s_{j+1}^i(t)}{2})$ and $1\leq j \leq K_i-1$. The first and last sub-regions are determined by the region boundaries, which are defined as $\varphi_0^i$ and $\varphi_{K_i}^i$, respectively. Let ${\varphi_j^{-1}}$ denote the scalar arc-length parameter of $\gamma^i(s)$ at the intersection of the boundary $\varphi_j^i$ and $\gamma^i$. In particular, the position of the $j$-th robot can be represented as $p_j=p_j^i+\delta_j\boldsymbol{v}(s_j^i)$, where $\delta_j$ is the offset of the robot in the normal direction of $\gamma^i$.

Consider the cost function in cell $G^i$ to be minimized
\begin{equation}
\mathcal{H}^i=\sum_{j = 1}^{K_i}\int_{O_j}f(p_j,q)\phi(q)\, dq  .
    \label{e4}
\end{equation}
Then we provide the following theorem.

\begin{theorem}
\label{theo:3}
For the GVG cell $G^i$, when $f(\cdot)=\|q-p_i\|^2$, the control law
\begin{equation}
    \begin{gathered}
        u_j= -k_{g} \int_{\varphi_{j-1}^{-1}}^{\varphi_{j}^{-1}} \bigl(p_j^i - \gamma^i(s)\bigr) \hat{\phi}(s) \, ds 
        + k_{g} \\
    \int_{\varphi_{j-1}^{-1}}^{\varphi_j^{-1}} \int_{-\epsilon(s)}^{\epsilon(s)}(r\boldsymbol{v}(s)-\delta_j\boldsymbol{v}(s_j^i))\cdot\phi\cdot(1-r\kappa(s))drds
        \label{e5}
    \end{gathered}
\end{equation}
where  $\hat{\phi}(s)=\int_{-\epsilon(s)}^{\epsilon(s)}\phi\bigl(\gamma^i(s) + r \boldsymbol{v}(s)\bigr)\bigl(1 - r \kappa(s)\bigr)dr$, $\kappa(s)$ is the curvature of $\gamma^i$ and $k_{g}$ is the positive gain, minimizes the cost function \eqref{e4} in a gradient-descent manner and allows the robots to perform coverage along the GVG edge.
\end{theorem}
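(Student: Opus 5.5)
The plan is to show that the control law \eqref{e5} equals $-\tfrac{k_g}{2}\,\partial\mathcal{H}^i/\partial p_j$. It then follows that $\mathcal{H}^i$ is non-increasing along trajectories of \eqref{e2}.

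\textbf{Step 1 (rewrite the cost).} Write the workspace $O_j$ in the Frenet--Serret coordinates \eqref{e:q}. By Lemma~\ref{le:2}, the fibres $Q^i(p)$ partition $G^i\setminus E^i$. Hence the sub-region of robot $j$ is exactly the image of $\{(s,r): s\in[\varphi_{j-1}^{-1},\varphi_j^{-1}],\ r\in[-\epsilon(s),\epsilon(s)]\}$. The Jacobian of $(s,r)\mapsto \gamma^i(s)+r\boldsymbol{v}(s)$ is $1-r\kappa(s)$, since $\boldsymbol{v}'(s)=-\kappa(s)\boldsymbol{\tau}(s)$ with the sign fixed by the rotation $R$. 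Therefore
\[
\mathcal{H}^i=\sum_{j=1}^{K_i}\int_{\varphi_{j-1}^{-1}}^{\varphi_j^{-1}}\int_{-\epsilon(s)}^{\epsilon(s)}\bigl\|\gamma^i(s)+r\boldsymbol{v}(s)-p_j\bigr\|^2\phi\,(1-r\kappa(s))\,dr\,ds .
\]

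\textbf{Step 2 (differentiate).} The boundaries $\varphi_j^{-1}$ depend on the robot parameters through the midpoints $(s_j^i+s_{j+1}^i)/2$, so differentiating moves them. I would handle this with the Leibniz rule, as in the standard Voronoi argument of \cite{r3}. On a shared boundary, the integrands of robots $j$ and $j+1$ need not coincide pointwise, because the boundary is the perpendicular fibre through the parametric midpoint rather than a true bisector. I would therefore either argue that the boundary terms cancel at the level of $\partial/\partial p_j$, with the fixed partition treated as the generalized gradient, or simply adopt the convention that $O_j$ is frozen when differentiating. The second option matches the paper's ``gradient-descent manner'' phrasing. With the region fixed,
\[
\frac{\partial\mathcal{H}^i}{\partial p_j}=2\int\!\!\int (p_j-q)\,\phi\,(1-r\kappa)\,dr\,ds .
\]

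\textbf{Step 3 (split the integrand).} Substitute $p_j=p_j^i+\delta_j\boldsymbol{v}(s_j^i)$ and $q=\gamma^i(s)+r\boldsymbol{v}(s)$. This gives
\[
p_j-q=\bigl(p_j^i-\gamma^i(s)\bigr)-\bigl(r\boldsymbol{v}(s)-\delta_j\boldsymbol{v}(s_j^i)\bigr).
\]
The first term does not depend on $r$, so integrating it in $r$ produces $\hat\phi(s)$. Multiplying by $-k_g/2$ then reproduces exactly the two integrals in \eqref{e5}, with the factor $2$ absorbed into $k_g$. Hence $u_j=-\tfrac{k_g}{2}\,\nabla_{p_j}\mathcal{H}^i$.

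\textbf{Step 4 (descent and coverage).} Use $V=\mathcal{H}^i$ as a Lyapunov function. This gives
\[
\dot{\mathcal{H}}^i=\sum_j \nabla_{p_j}\mathcal{H}^i\cdot u_j=-\tfrac{k_g}{2}\sum_j\|\nabla_{p_j}\mathcal{H}^i\|^2\le 0 .
\]
By LaSalle's principle on the compact set $G^i$, the trajectories converge to critical configurations. At such configurations each $p_j$ is the mass centroid of its strip, so the robots spread along $E^i$. The main obstacle is the boundary-term issue in Step 2, that is, justifying that the moving partition does not destroy the gradient identity. If exact cancellation fails, I would fall back on a two-step (Lloyd-type) argument: for fixed positions, re-partitioning cannot increase the cost, by the nearest-projection property of Lemma~\ref{le:2}; for a fixed partition, the flow is an exact gradient descent.
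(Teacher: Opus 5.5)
Your Steps 1--3 are correct under the frozen-partition convention, and they are essentially the paper's proof. The paper also ignores, without comment, the motion of the cuts $\varphi_j^{-1}$, and it ends simply by setting $u_j=-k_g'\,\partial\mathcal{H}^i/\partial p_j$. Your variant (differentiate the full integrand, then split $p_j-q$) is cleaner than the paper's split $\mathcal{H}^i=\mathcal{H}^i_{tan}+\mathcal{H}^i_{norm}$. That split implicitly treats $\partial p_j^i/\partial p_j$ as the identity in both pieces. Your computation shows those contributions cancel, so the result does not depend on how the projection $p_j^i$ varies with $p_j$.

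The genuine gap is the claim ``it then follows that $\mathcal{H}^i$ is non-increasing,'' and Step 4, which is built on it. In the closed loop the interior cuts $a_j=\varphi_j^{-1}=(s_j^i+s_{j+1}^i)/2$ move with the robots. With $\nabla_{p_j}\mathcal{H}^i$ the frozen-partition gradient, this gives
\[
\begin{aligned}
\dot{\mathcal{H}}^i={}&\sum_{j=1}^{K_i}\nabla_{p_j}\mathcal{H}^i\cdot u_j+\sum_{j=1}^{K_i-1}\dot a_j\,B_j,\\
B_j={}&\int_{-\epsilon(a_j)}^{\epsilon(a_j)}\bigl(\|q-p_j\|^2-\|q-p_{j+1}\|^2\bigr)\phi\,(1-r\kappa)\,dr .
\end{aligned}
\]
The $B_j$ terms are neither zero nor signed in general. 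Already for a straight edge with $\delta_j\neq\delta_{j+1}$, the integrand on the cut is $(\delta_{j+1}-\delta_j)(2r-\delta_j-\delta_{j+1})$, and its $\phi$-weighted integral vanishes only by accident. The cut is the normal fibre through the parametric midpoint, not a bisector of $p_j$ and $p_{j+1}$, so the boundary terms of the standard Voronoi argument do not cancel. Hence $\dot{\mathcal{H}}^i=-\tfrac{k_g}{2}\sum_j\|\nabla_{p_j}\mathcal{H}^i\|^2$ is not established, and LaSalle cannot be applied.

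Your Lloyd-type fallback fails for the same reason. Lemma~\ref{le:2} concerns the nearest point of $E^i$, not the nearest robot. The midpoint-fibre partition is not the cost-minimizing partition for fixed positions, not even among fibre partitions, whose optimal cut is the zero of $B_j$. So re-partitioning can increase $\mathcal{H}^i$.

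Separately, LaSalle ``on the compact set $G^i$'' needs positive invariance of $G^i$. That is not automatic for a curved non-convex strip, whose sub-region centroids can lie outside it.

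To repair this, either limit the claim to what the paper actually proves (that \eqref{e5} is the negative gradient of $\mathcal{H}^i$ for the current, frozen partition), or change the cut rule so that $B_j=0$ before asserting monotone decrease.
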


\begin{proof}
From Eq.~\eqref{e:q}, we have transformed $q\in\mathbb{R}^2$ into its representation in the coordinate system $(s,r)$. Therefore, we have $dq=J(s,r)dsdr$, where $J(s,r)$ is the Jacobian determinant. Given that 
\begin{equation*}
{\frac{\partial q}{\partial s}}=\boldsymbol{\tau}(s)(1-r\kappa(s)), \quad {\frac{\partial q}{\partial r}}=\boldsymbol{v}(s),
\end{equation*}
$J(s,r)$ can be computed by
\begin{equation*}
    \begin{gathered}
    J(s,r)=\left| \begin{matrix} {\boldsymbol{\tau}_x-r\kappa\boldsymbol{\tau}_x}&{\boldsymbol{v}_x} \\ {\boldsymbol{\tau}_y-r\kappa\boldsymbol{\tau}_y} & {\boldsymbol{v}_y} \end{matrix} \right| = (1-r\kappa)(\boldsymbol{\tau}_x\boldsymbol{v}_y-\boldsymbol{\tau}_y\boldsymbol{v}_x).
    \end{gathered}
\end{equation*}
Since the two vectors $\boldsymbol{\tau}$ and $\boldsymbol{v}$ are unit orthogonal vectors, $\boldsymbol{\tau}_x\boldsymbol{v}_y-\boldsymbol{\tau}_y\boldsymbol{v}_x=1$, so $J(s,r)=1-r\kappa(s)$.

Thus, the cost function~\eqref{e4} can be written as
\begin{equation}
\begin{gathered}
\mathcal{H}^i=\sum_{j = 1}^{K_i}\int_{\varphi_{j-1}^{-1}}^{\varphi_j^{-1}}\int_{-\epsilon(s)}^{\epsilon(s)}\|q-p_j\|^2\phi(q)(1-r\kappa(s))\, drds  .
\label{e6}
\end{gathered}
\end{equation}
We split Eq.~\eqref{e6} into two parts, i.e.
\begin{equation}
    \begin{gathered}
        \mathcal{H}^i=\mathcal{H}^i_{tan}+\mathcal{H}^i_{norm},
        \label{e7}
    \end{gathered}
\end{equation}
where $\mathcal{H}^i_{tan}$ and $\mathcal{H}^i_{norm}$ are the decomposition of the cost function in the tangential direction and normal direction of the GVG curve $\gamma^i$, respectively. Let the projection of density in the normal direction be $\hat{\phi}(s)=\int_{-\epsilon(s)}^{\epsilon(s)}\phi\bigl(q(s,r))\bigl(1 - r \kappa(s)\bigr)dr$ and we could get
\begin{equation*}
\begin{gathered}
\mathcal{H}^i_{tan}=\sum_{j = 1}^{K_i}\int_{\varphi_{j-1}^{-1}}^{\varphi_j^{-1}}\|\gamma(s)-p_j^i\|^2\hat{\phi}(s)\, ds  .
\end{gathered}
\end{equation*}
$\mathcal{H}^i_{norm}$ can be derived as
\begin{equation*}
\begin{gathered}
\mathcal{H}^i_{norm}=\mathcal{H}^i-\mathcal{H}^i_{tan}\\
=\sum_{j = 1}^{K_i}\int_{\varphi_{j-1}^{-1}}^{\varphi_j^{-1}}\int_{-\epsilon(s)}^{\epsilon(s)}(\|\gamma(s)+r\boldsymbol{v}(s)-p_j\|^2-
\\\|\gamma(s)-p_j^i\|^2)\phi(\gamma(s)+r\boldsymbol{v}(s))(1-r\kappa(s))\, drds.
\end{gathered}
\end{equation*}

Taking the partial derivative of $\mathcal{H}^i_{tan}$ and $\mathcal{H}^i_{norm}$, we could get
\begin{equation*}
    \begin{gathered}
        \frac{\partial\mathcal{H}^i_{tan}}{\partial p_j}=2\int_{\varphi_{j-1}^{-1}}^{\varphi_j^{-1}}(p_j^i-\gamma(s))\hat{\phi}(s)\, ds
    \end{gathered}
\end{equation*}
and
\begin{equation*}
    \begin{gathered}
        \frac{\partial\mathcal{H}^i_{norm}}{\partial p_j}=\int_{\varphi_{j-1}^{-1}}^{\varphi_j^{-1}}\int_{-\epsilon(s)}^{\epsilon(s)}2[(p_j-\gamma(s)-r\boldsymbol{v}(s))-\\(p_j^i-\gamma(s))]\phi(\gamma(s)+r\boldsymbol{v}(s))(1-r\kappa(s))drds\\=-2\int_{\varphi_{j-1}^{-1}}^{\varphi_j^{-1}} \int_{-\epsilon(s)}^{\epsilon(s)}(r\boldsymbol{v}(s)- \delta_j\boldsymbol{v}(s_j^i))\cdot\phi\cdot(1-r\kappa(s))drds.
    \end{gathered}
\end{equation*}

Therefore, to minimize \eqref{e4}, it must hold that $\frac{\partial\mathcal{H}^i}{\partial p_j}=\frac{\partial\mathcal{H}_{tan}^i}{\partial p_j}+\frac{\partial\mathcal{H}_{norm}^i}{\partial p_j}=0$. Then we let the control input be $u_j = -k_g'\frac{\partial\mathcal{H}^i}{\partial p_j}$, where $k_g'$ is a positive gain, which completes the proof. 
\end{proof}

Theorem \ref{theo:3} indicates that the control input \eqref{e5} constrains the robot to move along the distribution of the GVG curve $\gamma^i$ while minimizing the cost function \eqref{e4}. 

Furthermore, note that the mass center along the GVG curve $\gamma^i$ and the mass of $\gamma^i$ in region $O_j$ are $p_{j,tan}^i=\frac{\int_{\varphi_{j-1}^{-1}}^{\varphi_j^{-1}}\gamma^i(s)\hat{\phi}(s)ds}{\int_{\varphi_{j-1}^{-1}}^{\varphi_j^{-1}}\hat{\phi}(s)ds}$ and $M_{j,tan}^i=\int_{\varphi_{j-1}^{-1}}^{\varphi_j^{-1}}\hat{\phi}(s)ds$, respectively. In this case,
\begin{equation}
    \begin{gathered}
         \frac{\partial\mathcal{H}^i_{tan}}{\partial p_j}=2\int_{\varphi_{j-1}^{-1}}^{\varphi_j^{-1}}(p_j^i-\gamma(s))\hat{\phi}(s)\, ds \\
         =2M_{j,tan}^i(p_j^i-p_{j,tan}).
    \end{gathered}
    \label{eq:tan}
\end{equation}
Similarly, when the controller \eqref{eq:tan} converges to $p_{j,tan}^i$, $\frac{\partial\mathcal{H}^i_{norm}}{\partial p_j}$ can be written as
\begin{equation}
    \begin{gathered}
         \frac{\partial\mathcal{H}^i_{norm}}{\partial p_j}
         =2M_{j,norm}^i(\delta_j-p_{j,norm})\boldsymbol{v}(p_{j,norm}),
    \end{gathered}
    \label{eq:norm}
\end{equation}
where $p_{j,norm}=\frac{\int_{-\epsilon(p_{j,tan})}^{\epsilon(p_{j,tan})}r\cdot\phi\cdot(1-r\kappa(p_{j,tan}))dr}{\int_{-\epsilon(p_{j,tan})}^{\epsilon(p_{j,tan})}r\cdot\phi\cdot(1-r\kappa(p_{j,tan}))dr}$ and $M_{j,norm}=\int_{-\epsilon(p_{j,tan})}^{\epsilon(p_{j,tan})}\phi(\gamma^i(p_{j,tan})+r\boldsymbol{v}(p_{j,tan}))(1-r\kappa(p_{j,tan}))dr$. Therefore, controller \eqref{e5} can be rewritten as
\begin{equation}
    \begin{aligned}
         u_j=-k_g[M_{j,tan}^i(p_j^i-p_{j,tan})+\\
          M_{j,norm}^i(\delta_j-p_{j,norm})\boldsymbol{v}(p_{j,norm})].
    \end{aligned}
    \label{eq:rewritten}
\end{equation}
Controller~\eqref{eq:rewritten} consists of two parts: first, controlling the robot to move along the GVG curve $\gamma^i$ to reach its mass center, and then moving along the normal direction to reach the centroid at the normal position. Thus, the robots are able to efficiently cover the whole region while achieving a high coverage quality.

\section{EXPERIMENTAL VALIDATION}
\label{sec:5}

In this section, we will provide validation of the proposed algorithm through simulations. In the simulations, the field of the region $D$ is chosen to be a $372\times 247$ rectangle area with four holes, which is shown in Fig.~\ref{fig:2}. According to the cell partition method shown in Section \ref{sec:2}, the whole area is divided into 9 cells. The density value is set to $\phi(x,y)=10^{-8}[(x-186)^2+(y-86)^2]$. In order to validate the effectiveness of the proposed scheme, we conduct the simulation using a group of 20 robots. 

The robots are first randomly assigned to the region, represented by green triangles as shown in Fig.~\ref{fig:5a}. Then Algorithm \ref{alg:1} and Algorithm \ref{alg:2} are performed to calculate the final configuration of the robots, which is illustrated in Fig.~\ref{fig:3}. It can be observed that for some $i\in \mathcal{C}$, $K_i$ vibrates continuously between two adjacent integer values. There are primarily two reasons underlying this issue. One is that the ideal number of robots to be allocated in these nine regions is not all integers. The other is that, given the distributed structure of the entire system, it is impossible to determine whether the system is in the ideal configuration, which leads to the continuous execution of the algorithms. Hence, we stop the execution after 80 iterations and each cell has obtained the ideal number of robots and the actual number of robots: $K^*_1=2.33,K^*_2=3.39,K^*_3=1.20,K^*_4=1.12,K^*_5=1.41,K^*_6=1.57,K^*_7=0.50,K^*_8=4.77,K^*_9=3.70$ and $K_i = 3, i=1,2,9,K_i=1,i=3,4,5,7 ,K_{6}=2$ and $K_{8}=5$.

\begin{figure}[htbp] 
    \centering 
    \includegraphics[width=0.8\linewidth]{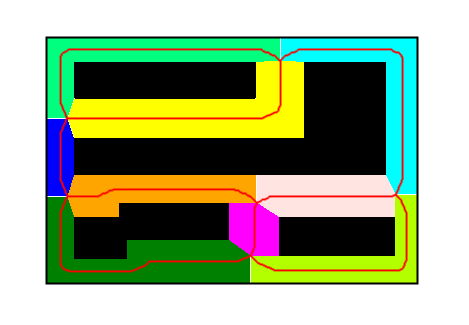} 
    \caption{The non-convex area $D$ with four black holes. The red line represents the GVG curve, and each GVG cell is denoted by a distinct color.} 
    \label{fig:2} 
\end{figure}

\begin{figure}[htbp] 
    \centering 
    \includegraphics[width=0.8\linewidth]{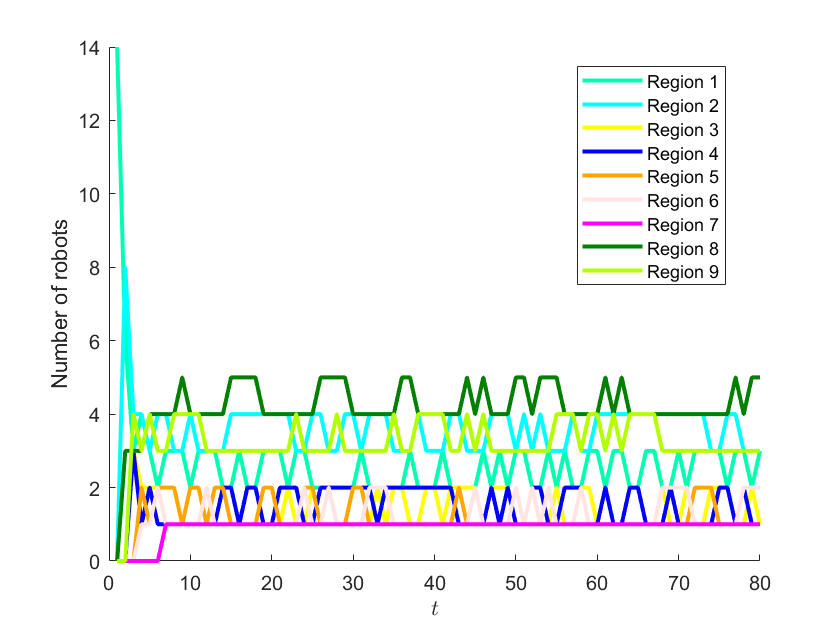} 
    \caption{The final configuration of robots in each region.} 
    \label{fig:3} 
\end{figure}

Fig.~\ref{fig:4} demonstrates the difference between the actual number and the ideal number of the robots after the execution of the algorithms. Notably, the entire system has achieved the final configuration which guarantees the load balancing. In addition, Fig.~\ref{fig:4} combined with the phenomenon shown in Fig.~\ref{fig:3}, also well verifies the conclusion of Theorem~\ref{theo:1}, namely that the robot configuration in all regions will eventually reach a state of being rounded up or rounded down relative to the ideal numbers.

\begin{figure}[htbp] 
    \centering 
    \includegraphics[width=0.9\linewidth]{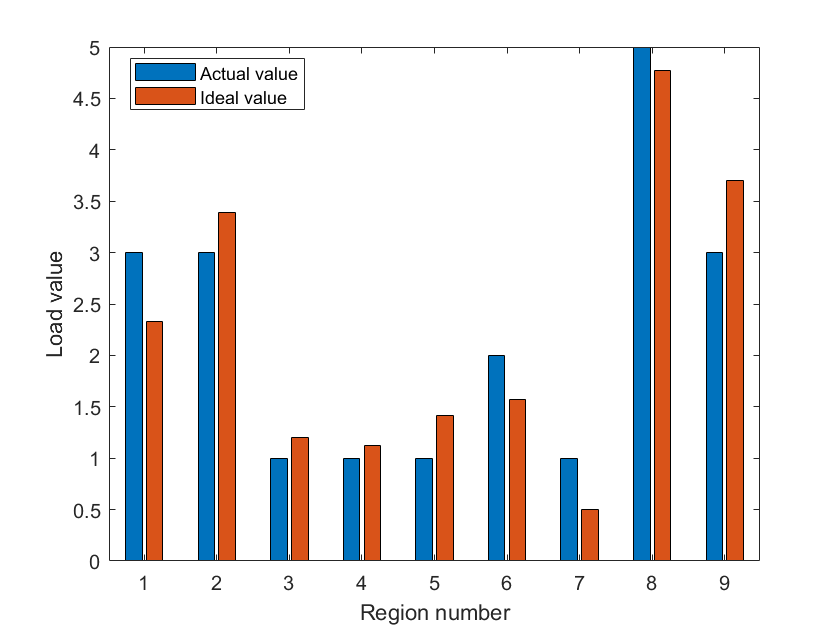} 
    \caption{A comparison between the actual number of robots and the ideal number of robots in each region.} 
    \label{fig:4} 
\end{figure}

After the execution of the Load-Balancing Algorithm, each robot is controlled to coverage the cell according to the controller~\eqref{e5}. Parameter value is set to $k_g=0.1$. As time goes on, the robots achieve complete coverage of the whole area. Fig.~\ref{fig:5c} shows the trajectories of the robots trying to coverage the cell where blue circles represent the trajectories. Fig.~\ref{fig:5d} presents the final configuration of the robots and the circles with different colors are the representations of the density. From the figures we can see that under the regulation of the controller~\eqref{e5}, each robot moves and performs coverage along the corresponding GVG curve. Furthermore, this approach effectively addresses the robot deployment issue in narrow and long regions, avoids the occurrence of suboptimal cases, and meanwhile provides a novel solution for the coverage problem in regions with non-convex boundaries.

\begin{figure}[htbp]
\centering
\subfloat[]
{\label{fig:5a}\includegraphics[width=0.5\linewidth]{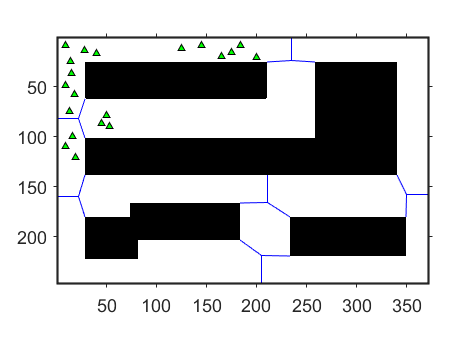}}
\subfloat[]
{\label{fig:5b}\includegraphics[width=0.5\linewidth]{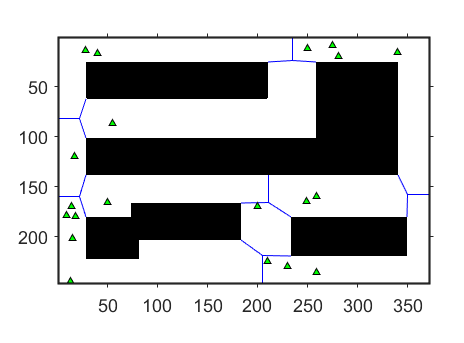}}
\\
\subfloat[]
{\label{fig:5c}\includegraphics[width=0.5\linewidth]{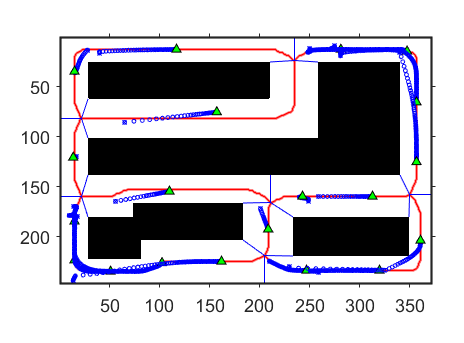}}%
\subfloat[]
{\label{fig:5d}\includegraphics[width=0.5\linewidth]{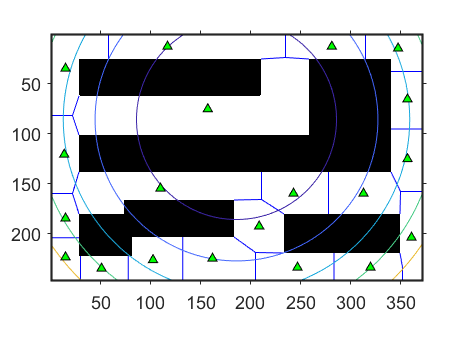}}%
\caption{The initial configuration is shown in (a) and the configuration after the execution of the Load-Balancing Algorithm is shown in (b). The trajectories of robots is shown in (c) and (d) shows the final configuration and the circles represent the density. } 
\label{fig:5} 
\end{figure}

To analyze the coverage performance of the algorithm we proposed in this paper, the following cost function is adopted
\begin{equation*}
\begin{aligned}
\mathcal{H}=10^{-3}\sum_{i = 1}^{|E|}\sum_{j = 1}^{K_i}\int_{O_j}f(p_j,q)\phi(q)\, dq . 
\end{aligned}
\end{equation*}
Fig.~\ref{fig:6} demonstrates the change of cost $\mathcal{H}$ in a given time. It is obvious that our algorithm proves to have a good coverage performance.

The simulation results show that the robots successfully coverage the whole region while avoiding the collision with the obstacles.

\begin{figure}[htbp] 
    \centering 
    \includegraphics[width=0.9\linewidth]{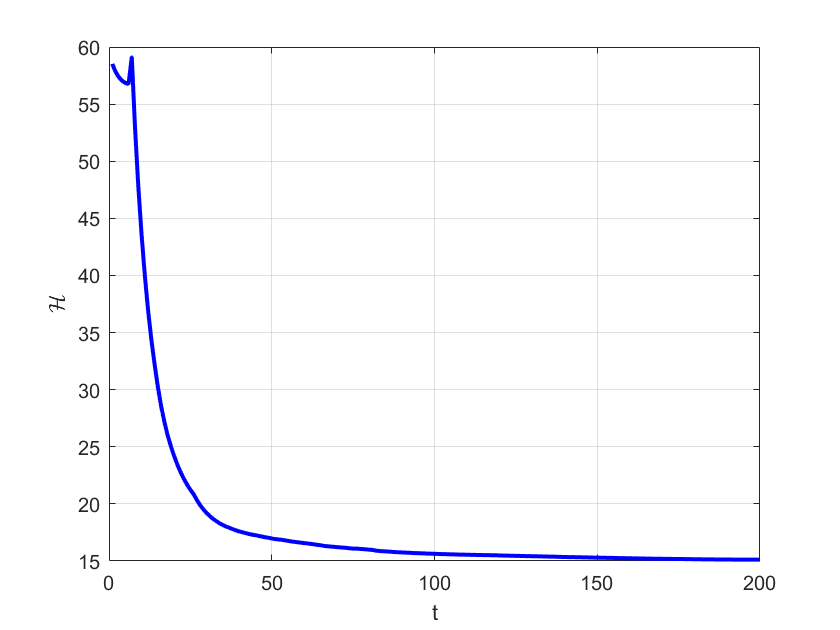} 
    \caption{Cost $\mathcal{H}$ for the area.} 
    \label{fig:6} 
\end{figure}

\section{CONCLUSIONS}
\label{sec:6}

In this paper, we have proposed a novel coverage control approach for non-convex environments with multiple obstacles based on the guidance of the generalized Voronoi graph. The convergence of the algorithms is proved and the effectiveness has been validated through simulations. In the future, our plans involve conducting real-world testing on a physical multi-robot platform. We also aim to expand the algorithms to more general dynamic non-convex environments and incorporate physical robot constraints into the collaborative coverage controller, to further bridge the gap between theoretical optimality and real-world deployment efficiency.

\addtolength{\textheight}{-12cm}   






%



\bibliographystyle{unsrt}
\bibliography{reference}

\end{document}